\newdimen\arrowsize
\newcommand{\independent}{\mbox{${}\perp\mkern-11mu\perp{}$}}
\newcommand{\notindependent}{\mbox{${}\not\!\perp\mkern-11mu\perp{}$}}
\newcommand{\prob}{{\mathbf P}}
\newcommand{\Z}{{\mathbb Z}}
\newcommand{\N}{{\mathbb N}}
\newcommand{\X}{{\mathbf X}}
\newcommand{\mean}{{\mathbf E}}  
\newcommand{\eps}{{N}}
\newcommand{\C}[1]{\mathcal{#1}}
\newcommand{\B}[1]{\mathbf{#1}}
\newcommand{\PA}[2][]{{\B{PA}}^{#1}_{#2}}
\newcommand{\tPA}[2][]{{\tilde{\B{PA}}}^{#1}_{#2}}
\newtheorem{definition}{Definition}
\newtheorem{Lemma}{Lemma}
\newtheorem{Theorem}{Theorem}
\newenvironment{proof}[1][. ]{{\bf Proof #1}}{\hfill$\square$\vskip\baselineskip}
\newenvironment{Definition*}{{\bf Definition}}{}
\begin{document} 

%\twocolumn[
\title{Causal Inference on Time Series using Structural Equation Models}

% It is OKAY to include author information, even for blind
% submissions: the style file will automatically remove it for you
% unless you've provided the [accepted] option to the icml2012
% package.
\author{Jonas Peters\footnote{Max Planck Institute
for Intelligent Systems,  
T\"ubingen, Germany}
\footnote{Seminar for Statistics, ETH Zurich, Switzerland}\\
peters@stat.math.ethz.ch  
\and 
Dominik Janzing$^*$\\
janzing@tuebingen.mpg.de
\and 
Bernhard Sch\"olkopf$^*$\\
bs@tuebingen.mpg.de
}

% You may provide any keywords that you 
% find helpful for describing your paper; these are used to populate 
% the "keywords" metadata in the PDF but will not be shown in the document
% \keywords{Causal Inference, Time Series, Granger Causality}

%\vskip 0.3in
%]

\maketitle

\begin{abstract}
%In causal inference the causal structure of random variables is inferred from observational data. Recently, it has been suggested to use functional models for causal inference on iid data. 
Causal inference uses observations to infer the causal structure of the data generating system. 
%In many areas (climate, neuroscience, finance, medicine, etc.) the data have an underlying time structure and traditional methods for iid data are not applicable.\\
%For causal inference on data with time structure 
We study a class of functional models that we call Time Series Models with Independent Noise (TiMINo). These models require independent residual time series, whereas traditional methods like Granger causality exploit the variance of residuals.
There are two main contributions: (1) {\it Theoretical:} By restricting the model class (e.g. to additive noise) we can provide a more general identifiability result than existing ones. This result incorporates lagged and instantaneous effects that can be nonlinear and do not need to be faithful, and non-instantaneous feedbacks between the time series. (2) {\it Practical:} If there are no feedback loops between time series, we propose an algorithm based on non-linear independence tests of time series. When the data are causally insufficient, or the data generating process does not satisfy the model assumptions, this algorithm may still give partial results, but mostly avoids incorrect answers. An extension to (non-instantaneous) feedbacks is possible, but not discussed.
It outperforms existing methods on artificial and real data. Code can be provided upon request.
%We justify our method by show the identifiability of TiMINo's: If the true data generating process satisfies such a model, it does not satisfy a model leading to a different causal structure.
%Additionally we provide an algorithm that if only parts of the structure satisfy the model assumption, it tries to identify the structure of the remaining part.
\end{abstract} 

\section{Introduction}
%Inferring the causal relationships between time series is an important problem in science and has been addressed in many works. 
We consider finitely many time series $X^i_t, i \in V$, with a maximal time order of $p$, that is we assume no influence from $X^i_{t-k}$ on $X^j_{t}$ for $k>p$. We further assume stationarity: the influence from $X^i_{t-k}$ on $X^j_{t}$ is required to be the same for all $t$. The question whether $X^i$ is causing $X^j$ now reads as whether there is a causal influence from some $X^i_{t-k}$ on $X^j_{t}$, for $0 \leq k < p$. All models assume homoscedastic noise.

We first review causal inference on iid data, that is in the case with no time structure, in Section~\ref{sec:2}. Note that iid methods cannot be applied directly on time series data because a common history might introduce complicated dependencies between contemporaneous data $X_t$ and $Y_t$.
Motivated by the iid case, \citet{Chu2008} and \citet{Hyvarinen2008} propose approaches for the time series setting that include linear instantaneous effects. 
%\citet{Chu2008} even deal with special cases of confounders. 
We describe these methods together with Granger causality in Section~\ref{sec:3}.
All of them encounter similar problems: none of them are general enough to include nonlinear instantaneous effects or hidden common causes. Furthermore, when the model assumptions are violated the methods give incorrect results and one draws false causal conclusions without noticing.
%this happens, for example, if one does not observe time series that are important for the system. 
We propose 
%a new framework using a general time series model that we call 
to use time series models with independent noise ({\it TiMINo}) that include nonlinear and instantaneous effects. %) that gives no answers instead of wrong answers when the model assumptions are violated.
The model is based on Functional Models (also known as Structural Equation Models) and assumes $X_t$ to be a function of all direct causes %(either instantaneous or from a former time point) 
and some noise variable, the collection of which is supposed to be jointly independent. %Based on the result by (see footnote 1 below)
This constitutes a relatively straight-forward extension on iid methods, but we regard the benefits in the setting of time series as substantial: 
In Section~\ref{sec_timino} we prove that for TiMINo models the full causal structure can be recovered from the distribution.
Section~\ref{sec_alg} introduces an algorithm ({\it TiMINo causality}) that recovers the model structure from a finite sample. It covers a broader class of models than existing methods and can be run with any provided algorithm for fitting time series. 
%In the experiments we use AR fitting, generalized additive models and Gaussian process regression.
%In the special linear case of an AR process (no instantaneous effects) with independent noise, where all relevant time series are observed, the proposed method with standard AR fitting gives the same results as linear G-causality. 
%Although this change from small residuals to independent residuals seems like a minor modification (especially when the estimation of residuals is based on the independence assumption), 
If the data do not satisfy the assumptions, TiMINo causality remains mostly (see Section~\ref{sec:wea}) undecided %infers ``bad model fit'' 
instead of drawing wrong causal conclusions.
The methods are applied to simulated and real data sets in Section~\ref{sec_exp}.
%Our method relates to the one proposed by \citet{Hyvarinen2008}: LiNGAM as described in \citet{Shimizu2006} is based on ICA, but nevertheless \citet{Hoyer2008} can be seen as a nonlinear extension of it. Similarly, our work can be seen as a nonlinear extension of a reformulated (ICA free) version from \citep{Hyvarinen2008}. 

%The paper is organized as follows:
%Section~\ref{sec:2} recaps concepts of causal inference on iid observations, especially constraint-based methods and functional models. 
%In Section~\ref{sec:3}, we discuss existing causal methods for time series and their relations to the iid case; we also give examples when these methods may fail. 
%We introduce functional models for time series in Section~\ref{sec_timino}: {\it TiMINo}. 
%In Section~\ref{sec_alg}, we develop a practical method that we call {\it TiMINo causality} and show experimental results in Section~\ref{sec_exp}. 
%We conclude in Section~\ref{sec_conclusions}.  

\section{Causal inference on iid data} \label{sec:2}
Inferring causal relations from observational data is challenging when 
%randomized experiments or 
interventions are not applicable. Given iid samples from 
%a joint distribution 
$\prob^{(X^i),i\in V}$, we try to find the underlying causal structure of the 
%random 
variables $X^i, i \in V$. 
%JOURNAL 
%Here, we discuss two of the methods that have been developed to approach this problem. 

\subsection{Directed acyclic graphs and constraint-based methods} \label{sec_iiddags}
Let $X^i, i \in V$ be a set of random variables and $\mathcal G$ a directed acyclic graph (DAG) on $V$.
%, with $E \subset \{V \times V\}$ being the edges.
%(we do not distinguish between nodes and variables)
The joint distribution %$\prob^{(X^i),i\in V}$ 
is said to be {\it Markov} with respect to the DAG $\mathcal G$ if 
%the local Markov property is fulfilled, that is, if 
each variable is independent of its non-descendants given its parents.
%\footnote{If the joint distribution is absolutely continuous with respect to a product measure, this is equivalent to the global Markov condition or the recursive factorization, \citep[Thm 3.27 in][]{lauritzen}.}
%Together with the assumption that the graph structure represents the causal relationships of the random variables, this is sometimes called the {\it causal} Markov condition. Note that this is an assumption about the true data generating process, not about the distribution of the random variables. 
The distribution %$\prob^{(X^i),i\in V}$ 
is {\it faithful} with respect to $\mathcal G$ if
all conditional independences are entailed by the Markov assumption.
Constraint-based methods \citep[e.g.][]{Spirtes2000} assume that the joint distribution is Markov, and faithful with respect to the true causal DAG.
%(note that this is an assumption about the true data generating process)
They show how to exploit conditional independences for reconstructing the graph $\mathcal G$, e.g. using the PC algorithm; but the graph can only be recovered up to {\it Markov equivalence classes}. 
%These are sets of graphs that impose the same conditional independence conditions. 
E.g., $X \rightarrow Y$ and $Y \rightarrow X$ cannot be distinguished. 
%JOURNAL??
%Figure~\ref{fig_me} shows an example for two Markov equivalent graphs that do not inherit any independences or conditional independences and thus cannot be distinguished from each other.
%\begin{figure}[ht]
%\begin{center}
%\begin{tikzpicture}[scale=0.65, line width=0.5pt, inner sep=1.0mm, shorten >=1pt, shorten <=1pt]
%  \tiny
%  \draw (1,1) node(a) [circle, draw] {$X$};
%  \draw (0,0) node(b) [circle, draw] {$Y$};
%  \draw (2,0) node(c) [circle, draw] {$Z$};
%  \draw[-arcsq] (a) -- (b);
%  \draw[-arcsq] (a) -- (c);
%  \draw[-arcsq] (b) -- (c);
%%  \draw (a) edge [->] (b);
%\end{tikzpicture}
%\hspace{0.5cm}
%\begin{tikzpicture}[scale=0.65,line width=0.5pt, inner sep=1.0mm, shorten >=1pt, shorten <=1pt]
%  \tiny  
%  \draw (1,1) node(a) [circle, draw] {$X$};
%  \draw (0,0) node(b) [circle, draw] {$Y$};
%  \draw (2,0) node(c) [circle, draw] {$Z$};
%  \draw[-arcsq] (a) -- (b);
%  \draw[-arcsq] (a) -- (c);
%  \draw[-arcsq] (c) -- (b);
%%  \draw (a) edge [->] (b);
%\end{tikzpicture}
%\end{center}
%\caption{These two causal DAGs cannot be distinguished by constraint-based methods: they are Markov equivalent.}
%\label{fig_me}
%\end{figure}
%Also $X \rightarrow Y$ is a graph that this method cannot distinguish from $Y \rightarrow X$. 

\subsection{Functional models and additive noise} \label{sec_iidfm}
Functional models \citep{Pearl2009} provide a different approach to the problem described above: % (see Chapter 1.4. in ):
We say 
%\begin{definition}
$\prob^{(X^i), i \in V}$ satisfies a \emph{functional model} if for all $i \in V$ there exists a set of nodes $\PA[i]{} \subseteq X^{V \setminus \{i\}}$, a function $f_i$ and a noise variable $N^i$, such that we can write
$
X^i=f_i(\PA[i]{}, N^i)\,.
$
(For any subset $\B{A} \subset V$ we define $X_\B{A}:=\{X^j\,|\, j \in A\}$.
Additionally, we require $(N^i)_{i\in V}$ to be jointly independent and the graph obtained by drawing arrows from all elements of $\PA[i]{}$ to $X^i$ (for each $i \in V$) to be acyclic.
%\end{definition}
%In Theorem 1.4.1. \citet{Pearl00} shows that in this case $\prob^{(X^i),i\in V}$ is Markovian with respect to the corresponding graph.
%question: is this equivalent?? 
%Thus, such a functional model can be seen as more fundamental than a causal DAG. 
%Some more recent methods exploit the following idea: 
By restricting the function class one can identify the bivariate case: \citet{Shimizu2006} show that if $\prob^{(X,Y)}$ allows for $Y=a\cdot X+N_Y$ with $N_Y \independent X$ then $\prob^{(X,Y)}$ only allows for $X=b\cdot Y+N_X$ if $(X,N_Y)$ are jointly Gaussian ($\independent$ stands for statistical independence). 
%\citep{Peters2011a} calls such a model class (e.g. $f_i(x, n) = a \cdot x + n$ and $(X,N)$ not jointly Gaussian) \emph{bivariate identifiable}. 
This idea has led to the extensions of nonlinear additive functions $f(x, n) = g(x) + n$ \citep{Hoyer2008}, post-nonlinear additive functions $f(x, n) = h\big(g(x) + n\big)$ \citep{Zhang2009} and discrete functions \citep{Peters2011d}. \citet{Peters2011} show that identifiability in the bivariate case is enough for multiple variables. 
%JOURNAL??
%More formally, \citet{Peters2011a} define:
%\begin{definition}[Bivariate Identifiable Set]
%A set $\C{A} \subseteq \C{B}$ with $\C{B} \subseteq \mathcal Y^{\C{X} \times \C{Y}} \times \mathcal P_{\C{X}} \times \math cal P_{\C{Y}}$ containing combinations of functions $f:\C{X} \times \C{Y} \rightarrow \C{Y}$ and distributions $\prob^X$, $\prob^N$ of input $X$ and noise $N$ is called \emph{bivariate identifiable} if 
%\begin{align*}
%&(f,\prob^{X}, \prob^{N_Y})\in \C{A} \text{ and } Y=f(X,N_Y), N_Y \independent X\\
%\Rightarrow \;  \not \exists &g, N_X: \\
%&(g,\prob^{Y}, \prob^{N_X})\in \C{B} \text{ and } X=g(Y,N_X), N_X \independent Y 
%\end{align*}
%holds. Additionally we require 
%\begin{equation} \label{eq:ind}
%f(X, N_Y) \nonindependent X
%\end{equation}
%for all $(f,\prob^{X}, \prob^{N_Y})\in \C{A}$ with $N_Y \independent X$.
%\end{definition}
%Based on this bivariate identifiability, \citet{Peters2011a} define a model class for more than two variables called IFMOC, for which they show that the whole DAG can be recovered.
\citet{Mooij2009} %and \citet{Tillman2009} 
provides practical ANM-based methods for more than two variables.
%vermutung? weak statement?
Sections~\ref{sec_timino} and \ref{sec_alg} apply these principles to time series.

\section{Causal inference on time series: existing methods} \label{sec:3}
For each $i$ from a finite $V$, let $\big(X^i_t\big)_{t \in \N}$ be a time series. $\X_t$ denotes the vector of time series values at time $t$.
We call the infinite graph that contains each variable $X^i_t$ as a node the {\it full time graph}. The {\it summary time graph} contains all $\#V$ components of the time series as vertices and an arrow between $X^i$ and $X^j$, $i \neq j$, if there is an arrow from $X^i_{t-k}$ to $X^j_t$ in the full time graph for some $k$. This work addresses the following

{\bf Problem:}
{\it Given a sample $(\X_1, \ldots, \X_T)$ of a multivariate time series, recover the true causal summary time graph. %Further if some of the time series are unobserved we would like to get as much information about the causal structure as possible.
}

%We provide the concepts of existing methods and mention how they relate to the constraint-based methods in Section~\ref{sec_iiddags}. Analogously to Section~\ref{sec_iidfm}, we propose a method called {\it TiMINo causality}.% that is based on time series models with independent noise.

%% A second approach investigates conditional independences (which we will therefore refer to as {\it CI causality})
%%: the idea is to consider each variable $X^i_t$ as a single node and to apply existing constraint-based methods for data sets without time structure, e.g. the PC algorithm \citep{Spirtes2000}. To make the algorithms feasible, however, we can use the time direction (``the future does not influence the past'') in order to reduce the number of required independence tests and keep the sets of variables we are condition on as small as possible. But still relaxations of independence tests are inevitable. 
%All proofs of this section are provided in the appendix.

\subsection{Granger causality} \label{sec_granger}
Granger causality \citep{Granger1969} ({\it G-causality} for the remainder of the article)
%Initially, G-causality has been formulated for two univariate time series with linear interactions. We also call the extensions to multivariate and nonlinear data G-causality. 
does not require complicated statistics, it is easy to implement, and it is based on the following idea: $X^i$ does not Granger cause $X^j$ if including the past of $X^i$ does not help in predicting $X^j_t$ given the past of 
all all other time series $X^k$, $k \neq i$. In principle, ``all other'' means all other information in the world. In practice, one is limited to $X^k$, $k \in V$.
In order to translate the phrase ``does not help'' into the mathematical language we need to assume a multivariate time series model.
If the data follow the assumed model, e.g. the VAR model below, G-causality is sometimes interpreted as testing whether $X^i_{t-h}, h>0$ is independent of $X^j_t$ given $X^k_{t-h}$, $k\in V \setminus \{i\}, h>0$ % or \eqref{bellmodel} below \citep{Florens1982}. 
\citep[see][and Section \ref{sec:anl}]{Florens1982, Eichler2011, Chu2008}.% formalizes this idea using $\sigma$-algebras and develops a concept for graphical models on time series. 
%Also \citet{Chu2008} use an interpretation of G-causality that is based on conditional independence.
\subsubsection{Linear Granger causality} \label{sec:lg}
Linear G-causality considers a VAR model:
%\begin{equation} \label{varmodel}
$
\X_t=\sum_{\tau=1}^{p} \mathbf{A}(\tau) \X_{t-\tau} +\mathbf{\eps}_t\,,
$
%\end{equation}
where $\X_t$ and $\mathbf{\eps}_t$ are vectors and $\mathbf{A}(\tau)$ are matrices. 
%If there is no ambiguity we omit the time lag $\tau$ and write $A_{ij}$ instead of $A_{ij}(\tau)$. 
For checking whether $X^i$ G-causes $X^j$ 
%(1) 
one fits a full VAR model $M_{\text{full}}$ to $\X_t$ and a VAR model $M_{\text{restr}}$ to $\X_t$ with the constraints $A_{\,\cdot \, i}(\tau)=0$ for all $1 \leq \tau \leq p$ that predicts $X^i_t$ without using $X^j$. Then one checks whether the reduction of the residual sum of squares (RSS) of $X^i_t$ is significant by using the following test statistic:
%\begin{equation} \label{eq:sta}
$T:=\frac{(RSS_{\text{restr}}-RSS_{\text{full}})/(p_{\text{full}}-p_{\text{restr}})}{RSS_{\text{full}}/(N-p_{\text{full}})}$,
%\end{equation}
where $p_{\text{full}}$ and $p_{\text{restr}}$ are the number of parameters in the respective models.
For the significance test we use $T \sim F_{p_{\text{full}}-p_{\text{restr}}, N-p_{\text{full}}}$. 

\subsubsection{Nonlinear Granger causality} \label{sec:nlg}
G-causality has been extended to nonlinear relationships, \citep[e.g.][]{Chen2004, Ancona2004}.
In this paper we focus on an extension for the bivariate case proposed by \citet{Bell1996}. It is based on generalized additive models (gams) \citep{Hastie1990}:
%\begin{equation} \label{bellmodel}
$X^i_t=\sum_{\tau=1}^{p} \sum_{j=1}^n f_{i,j,\tau}(X^j_{t-\tau}) +\eps^i_t$,
%\end{equation}
where $\mathbf{\eps}_t$ is a $\#V$ dimensional noise vector. In order to test whether $X^2$ G-causes $X^1$, for example with order $1$, two models are fit:
%\begin{equation} \label{nonling}
%X^1_t=g_1(X^1_{t-1})+\eps_t \qquad \mbox{ and } \qquad X^1_t=g_2(X^1_{t-1})+g_3(X^2_{t-1})+M_t
%\end{equation}
$X^1_t=g_1(X^1_{t-1})+\eps_t$ and $X^1_t=g_2(X^1_{t-1})+g_3(X^2_{t-1})+M_t$.
%The first model results in residuals $\hat \eps_2, \ldots, \hat \eps_T)$, the second in $\hat M_2, \ldots, \hat M_T$. \citet{Hastie1990} argue that the statistic
%$$
%F=\frac{(\sum_{t=2}^T \hat \eps^2_t- \sum_{t=2}^T\hat M^2_t)/(d_2-d_1)}{\sum_{t=2}^T \hat M^2_t/(n-d_2)}
%$$
\citet{Bell1996} utilize the same $F$ statistic as above; this time $p_{\text{full}}$ and $p_{\text{restr}}$ are the estimated degrees of freedom of the corresponding models.
They refer to simulation studies by \citet{Hastie1990}.

%It is known that G-causality gives wrong answers if the model assumptions are not met. For illustration we consider the following two cases:

\subsection{ANLTSM} \label{sec:anl}
Following \citet{Bell1996}, \citet{Chu2008} introduce additive nonlinear time series models (ANLTSM for short) for performing relaxed conditional independence tests: If including one variable, e.g. $X^1_{t-1}$, into a model for $X^2_t$ that already includes $X^2_{t-2}, X^2_{t-1},$ and $X^1_{t-2}$ does not improve the predictability of $X^2_t$, then $X^1_{t-1}$ is said to be independent of $X^2_t$ given $X^2_{t-2}, X^2_{t-1}, X^1_{t-2}$ (if the maximal time lag is $2$). \citet{Chu2008} propose a method based on constraint-based methods like FCI \citep{Spirtes2000} in order to infer the causal structure exploiting those conditional independence statements. The instantaneous effects are assumed to be linear and the confounders linear and instantaneous. Unfortunately, we did not find code for this method.

\subsection{TS-LiNGAM}
LiNGAM \citep{Shimizu2006} infers causal graphs for linear, non-Gaussian data. It has been extended to time series by \citet{Hyvarinen2008} (for short: TS-LiNGAM). It allows for instantaneous effects, all relationships are assumed to be linear. Hidden confounders and nonlinearities may lead to wrong results.

\subsection{Limitations of existing methods} \label{sec:lgc}
The approaches described above suffer from the following methodological problems:
(1) {\it Instantaneous effects:} The formulation of G-causality has the intrinsic problem that it cannot deal with instantaneous effects. E.g., when $X_t$ is causing $Y_t$, including any of the two time series helps for predicting the other. Thus G-causality infers $X \rightarrow Y$ and $Y \rightarrow X$.
ANLTSM and TS-LiNGAM only allow linear instantaneous effects. Theorem~\ref{thm:timino} shows that the causal summary time graph may still be identifiable when the instantaneous effects are linear and the variables are jointly Gaussian. TS-LiNGAM does not work in these situations.
(2) {\it Confounders:} G-causality might fail when there is a confounder between $X_{t}$ and $Y_{t+1}$, for example: %From an independence point of view, it is easy to understand why: 
The path between $X_t$ and $Y_{t+1}$ cannot be blocked by conditioning on any of the observed variables; G-causality infers $X \rightarrow Y$.
ANLTSM does not allow for nonlinear confounders or confounders with time structure and TS-LiNGAM may fail, too (Exp.~1).
(3) {\it Bad model assumptions:}
The methods share a similar problem: 
Performing general conditional independence tests is desirable, but not feasible, partially because the conditioning sets are too large \citep[e.g.][]{Bergsma2004}. Thus, the test is performed under a simple model, for example a linear one.
%JOURNAL
%A simple model, however, (e.g. linear) may lead to a different causal graph than a more sophisticated model (e.g. nonlinear with instantaneous effects). 
%Usually, the results made by the more complex method are interpreted to be more realistic, but it is possible that an even more general method infers other causal relationships again. 
If the model assumption is violated, one may draw wrong conclusions without noticing (e.g. %linear model when the data are nonlinear, see 
Exp.~3).
%after fitting a TiMINo to the data independent residuals strongly indicate that the model is correct. 
For TiMINo, that we define below, Lemma~\ref{lem:lmc} shows that after fitting and checking the model by testing for independent residuals, the difficult conditional independences have been checked implicitly.

Thus, a {\it model check} is a simple but effective improvement. Although G-causality for two time series can easily be augmented with a cross-correlation test, we do not see a straight-forward extension to the multivariate G-causality. Furthermore, testing for cross-correlation does not always suffice (see Section~\ref{sec_full}).

\section{Functional models for time series: TiMINo} \label{sec_timino}
%We propose the following approach: (1) Assume a class of models that {\it imply} different conditional independences using independent noise. (2) Check, which of the model instances (if any) is true.\\
We define TiMINo, a model class including the models described above and prove its identifiability.  %In short, one fits models that include the independence of the noise sequences.
%JOURNAL???
%In Section~\ref{sec_alg} we introduce a method called {\it TiMINo causality} that exploits TiMINo to perform causal inference. 
%Advantageously, if the model is false, the method will not output any causal structure instead of proposing a wrong one.
%We will see that we can make weaker assumptions about the dynamics of the cause than G-causality or \CIAM causality does. 
%The exact model reads:
%If there is no causal loop we do not assume that the cause satisfies an AR model, which is contrary to Granger causality. That is the data either satisfies 
\begin{definition} \label{def:timino}
Consider a time series $\X_t=(X^i_t)_{i \in V}$, such that the finite dimensional distributions are absolutely continuous with respect to a product measure (i.e. there is a pdf or a pmf). We say the time series satisfies a \emph{TiMINo} if there is a $p>0$ and if $\forall i \in V$ there are sets $\PA[i]{0} \subseteq X^{V\setminus\{i\}}, \PA[i]{k} \subseteq X^V$, s.t. $\forall t$
%\begin{equation} \label{anm}
%X^j_t = \sum_{\tau=1}^p \sum_{i \in U} f_{i,\tau}(X_i(t-\tau)) + \eps_j(t)\,, \qquad U \subset V\,,
%\end{equation}
%where not all $f_{i,\tau}\equiv \mathrm{const.}$.
\begin{align} \label{anm}
X^i_t = f_{i}\big((\PA[i]{p})_{t-p}, \ldots, (\PA[i]{1})_{t-1}, (\PA[i]{0})_{t}, \eps^i_t\big)\,,
\end{align}
with  
%If $f_{j,i}$ is not constant in component $X^{i}_{t-h}$ we draw an arrow from $X^{i}_{t-h}$ to $X^j_t$ in the full time graph 
$\eps^i_t$ (jointly) independent and for each $i$, $\eps^i_t$ identically distributed in $t$. The corresponding full time graph is obtained by drawing arrows from any node that appears in the right-hand side of \eqref{anm} to $X_t^i$. We require the full time graph to be acyclic.
\end{definition}

Below we assume that equations \eqref{anm} follow an identifiable functional model class (IFMOC), \citet{Peters2011} give a precise definition. Basically, it means that 
(I) {\it causal minimality} holds, a weak form of faithfulness that assumes a statistical dependence between cause and effect given all other parents \citep[][]{Spirtes2000}. 
And (II), all $f_i$ come from a function class (e.g. additive noise) that is small enough to make the bivariate case identifiable (Section \ref{sec_iidfm}) if we exclude certain function-input-noise combinations like linear-Gaussian-Gaussian.
The proof of the following theoretical result can be found in the appendix.
%JOURNAL??
%\begin{proposition} \label{timino_markov}
%The full time graph obtained from the procedure above satisfies the local Markov condition, i.e. each node is independent of its non-descendants given its parents.
%\end{proposition}
%???This proves that a TiMINo model entails stronger conditions than the idea of CI causality, which only assumes that the full time graph satisfies the causal Markov condition. Therefore a TiMINo model inherits at least the identifiability results from CI causality. %Even more, we also have identifiability of the instantaneous effects (in the generic case).
\begin{Theorem} \label{thm:timino}
Suppose that $\X_t$ can be represented as a TiMINo with $\PA[]{}(X^i_t)=\bigcup_{k=0}^p (\PA[i]{k})_{t-k}$ being the direct causes of $X^i_t$ and that one of the following holds: \vspace{-0.2cm}
\begin{itemize}
\item[(i)] Equations \eqref{anm} come from an IFMOC.  \vspace{-0.2cm}
\item[(ii)] Each component of the time series exhibits a time structure (i.e. $\PA[]{}(X^i_t)$ contains at least one $X^i_{t-k}$), the joint distribution is faithful with respect to the full time graph, and the summary time graph is acyclic.  \vspace{-0.2cm}
\end{itemize}
Then the full time graph can be recovered from the joint distribution. In particular, the true causal summary time graph is identifiable. (Note that neither of the two conditions implies the other.)
\end{Theorem}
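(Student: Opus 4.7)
The plan is to treat cases (i) and (ii) separately; both aim to recover the full time graph, after which the summary graph follows by projection.

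For (i), I would view \eqref{anm} as a single structural causal model on the countable index set $V\times\Z$. Each node $X^i_t$ has only finitely many parents, the noises $(\eps^i_t)$ are jointly independent, and the full time graph is acyclic by Definition \ref{def:timino}. Because the IFMOC identifiability theorem of Peters et al.\ 2011 is stated for finite DAGs, I would fix a time $t$ and restrict attention to the window $W_t=\{X^j_{t-k}\,:\,j\in V,\,0\le k\le p\}$, which contains $X^i_t$ together with all its direct causes. On $W_t$ the equations form an IFMOC, so Peters et al.\ 2011 gives that no alternative DAG within the class can reproduce the induced joint distribution; in particular the parent set $\PA[]{}(X^i_t)$ is uniquely determined by the distribution. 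Stationarity then propagates this to every $t$, and taking the union over $t$ reconstructs the full time graph.

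For (ii), I would use a constraint-based argument. The structural equations with independent noise and an acyclic full time graph make the joint distribution Markov with respect to that graph, and faithfulness is assumed. A PC-style analysis therefore recovers the skeleton and all v-structures of the full time graph. Temporal precedence orients every edge between nodes at different time indices. It remains to orient each contemporaneous edge between some $X^i_t$ and $X^j_t$. Here the time-structure hypothesis enters: $X^j$ has a parent $X^j_{t-k}$ with $k\ge 1$. If $X^j_{t-k}$ and $X^i_t$ are non-adjacent, the triple with centre $X^j_t$ is either a v-structure, already oriented by PC as $X^i_t\to X^j_t$, or it is not, in which case Meek's rule R1 forces $X^j_t\to X^i_t$. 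If $X^j_{t-k}$ and $X^i_t$ are adjacent, temporal precedence gives $X^j_{t-k}\to X^i_t$, so $X^j\to X^i$ already appears in the summary graph, and summary-acyclicity rules out $X^i_t\to X^j_t$. Either way the contemporaneous edge receives a unique orientation, and stationarity makes the orientations consistent across $t$.

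The main obstacle in (i) is transferring the finite-DAG IFMOC statement to the infinite time graph: one has to check that identifying parents on each finite window $W_t$ from the window's marginal is the correct notion and is not spoiled by dependencies carried in from outside the window, so that stationarity really glues the per-$t$ identifications into a statement about the whole infinite graph. The main delicacy in (ii) is verifying that the orientation procedure pins down every contemporaneous edge and never conflicts; the self-dependence hypothesis is what blocks an isolated contemporaneous component from remaining unorientable, while summary-acyclicity is what prevents different time slots from assigning opposite orientations to the same pair. The closing remark that neither (i) nor (ii) implies the other is immediate from canonical examples: a linear-Gaussian VAR with instantaneous effects may satisfy (ii) but fail (i), while an IFMOC in which some component has no temporal parent can satisfy (i) but violate the time-structure premise of (ii).
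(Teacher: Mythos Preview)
Your treatment of (ii) is essentially the paper's argument dressed in PC/Meek language: the paper also uses that the own lag $X^j_{t-k}\to X^j_t$ together with an instantaneous edge $X^i_t\to X^j_t$ forms a v-structure, with non-adjacency of $X^j_{t-k}$ and $X^i_t$ forced by summary-acyclicity. Your case split is more explicit but equivalent.

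Your argument for (i), however, has a real gap exactly where you flag the ``main obstacle'': the equations restricted to the window $W_t=\{X^j_{t-k}:0\le k\le p\}$ do \emph{not} form an IFMOC. The oldest layer $X^j_{t-p}$ has no parents inside $W_t$, so these nodes would have to play the role of their own noise variables; but they are in general dependent because they share a common past. More generally, every node $X^j_{t-l}$ with $l\ge 1$ has some structural parents outside $W_t$, so you cannot write its equation with a noise that is jointly independent of the other window noises. Hence the Peters et al.\ identifiability result does not apply to the marginal on $W_t$, and the sentence ``On $W_t$ the equations form an IFMOC'' is not justified.

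The paper avoids this by decoupling lagged from instantaneous edges. For a lagged discrepancy between two candidate graphs it uses only the local Markov property (Lemma~\ref{lem:lmc}) together with causal minimality---which is part of the IFMOC assumption---to derive contradictory conditional (in)dependence statements; no finite-IFMOC reduction is needed there. For instantaneous edges it conditions on \emph{all} lagged variables $\mathcal S=\{X^i_{t-l}:1\le l\le p,\ i\in V\}$. Given $\mathcal S=s$, each $X^i_t$ equals $f_i(s,(\tilde{\PA[i]{0}})_t,\eps^i_t)$, and now the only remaining randomness is $(\eps^i_t)_{i\in V}$, which \emph{is} jointly independent. So the contemporaneous slice, conditionally on the past, is a genuine finite IFMOC over $\#V$ variables, and the IFMOC identifiability theorem applies cleanly. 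This conditioning step is the missing idea in your proposal.
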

Regarding (i): Many choices of a function class are possible \citep{Peters2011}. In practice, however, one still needs to fit those functions $f_i$ from the data, which means for additive noise that
%The longer the time series are, the more general the considered reliably given a finite amount of data. %In principle, all models that have been reported in literature and can be fit reliably, can be used. 
estimating $\mean[X^i_t|\X_{t-p}, \ldots, \X_{t-1}]$ should be feasible. Different results show that stationarity and/or $\alpha$ mixing, or geometric ergodicity are required \citep[e.g.][]{Chu2008}.
In this work we consider VAR fitting: $f_i(p_1, \ldots, p_r, n)=a_{i,1}\cdot p_1+ \ldots + a_{i,r} \cdot p_r + n$, gam regression: $f_i(p_1, \ldots, p_r, n)=f_{i,1}(p_1)+ \ldots + f_{i,r}(p_r)+ n$ \citep[e.g.][]{Bell1996}, and GP regression: $f_i(p_1, \ldots, p_r, n)=f_i(p_1, \ldots, p_r) + n$.
%, which leads to extensions of \eqref{varmodel} and \eqref{bellmodel} from G-causality. %Thus obtain a fair comparison of our method to G-causality and \CIAM causality. 
Note that linear functions lead to the model of \citet{Hyvarinen2008} as a special case.\\
Regarding (ii): This condition nicely shows how the time structure does not only make the causal inference problem harder (the iid assumption is dropped), but also easier. In the iid case, for example, the true graph is not identifiable if all components are jointly Gaussian and the relationships are linear; with time structure it is. (TS-LiNGAM would fail, though.)

\section{A practical method: TiMINo causality} \label{sec_alg}
The algorithm for TiMINo causality is based on the theoretical finding in Theorem \ref{thm:timino}. It takes the time series data as input and outputs either a DAG that estimates the summary time graph or remains undecided. In principle, it tries to fit a TiMINo model to the data and outputs the corresponding graph. If no model with independent residuals is found, it outputs ``I do not know''.
%If feedback loops are allowed, one can fit all possible full time graphs to the data and check which one (if any) results in independent residuals. 
For a time series with many components, this gets intractable. In Section \ref{sec_exp}, we concentrate on time series without feedback loops, where we can exploit a more efficient method:

%JOURNAL
%\begin{comment}
%\subsection{Two-dimensional time series}
%First consider a two dimensional time series $\mathbf X_{t}=(X^1_{t}, X^2_t)'$ of order $2$, for example. In order to check whether this time series satisfies the condition of a TiMINo model of order $p$, we fit different models:
%\begin{align*}
%X^1_{t}=& g_1(X^1_{t-2}, X^1_{t-1}) 
%% + \,&g_2(X_{2,t-p}, \ldots, X_{2,t-1}) + \eps_{1,t}\\
% + g_2(X^2_{t-2}, X^2_{t-1}) + \eps^1_{t}\\
%X^1_{t}= &\tilde g_1(X^1_{t-2}, X^1_{t-1}) + \tilde \eps^1_{t}\\
%X^2_{t}= &f_2(X^2_{t-2}, X^2_{t-1}) 
%%+ \,&f_1(X_{1,t-p}, \ldots, X_{1,t-1}) + \eps_{2,t} \quad \mbox{and} \\
%+ f_1(X^1_{t-2}, X^1_{t-1}) + \eps^2_{t} \\
%X^2_{t}=& \tilde f_2(X^2_{t-2}, X^2_{t-1}) + \tilde \eps^2_{t}
%\end{align*}
%Check whether (1) $\eps^2_{t}$ is independent of $X^2_{t-h}, h \geq 0$ and independent of $X^1_{t-h}, h \in \Z$, whether (2) the same independence holds for $\tilde \eps^2_t$, whether (3) $\eps^1_t$ is independent of $X^1_{t-h}, h \geq 0$ and independent of $X^2_{t-h}, h \in \Z$ and (4) whether the same independence holds for $\tilde \eps^1_{t}$. The results are then interpreted as follows:
%(1) and (2) or (3) and (4): no causal connection,
%(1) and not (2):  $X^2 \rightarrow X^1$,
%(3) and not (4): $X^1 \rightarrow X^2$,
%everything else: bad model fit.

%\end{comment}

\subsection{Full causal discovery} \label{sec_full}
%Consider a multivariate time series $\mathbf X_{t}=(X^1_{t}, \ldots, X^d_{t})'$.
% with $d\geq 3$. It is obvious that a pairwise causal analysis between the components of the time series vector may lead to misleading results.
For additive noise models (ANMs) without time structure, \citet{Mooij2009} propose a procedure that recovers the structure without enumerating all possible DAGs. This procedure can be modified to be of use for time series (Algorithm~1). %\ref{alg}).
As reported by \citet{Mooij2009}, the time complexity is $\mathcal O(d^2)$, where $d$ is the number of time series, regarding fitting models and independence testing as atomic operations. To get the full time complexity, $\mathcal O(d^2)$ has to be multiplied by the sum of the complexity of the regression method and the independence test, both chosen by the user.
%JOURNAL: add complexity terms.
%
%\begin{figure}[tt!]
%\begin{minipage}{0.49\textwidth}
\begin{algorithm}[h]
%\begin{wrapfigure}[20]{l}{0.49\textwidth}
%$\mbox{ }$\vspace{-0.5cm} \hline\\
%{\bf Algorithm 1:} Discovery of TiMINo causality on multivariate time series
%\vspace{0.1cm}
%\hline
\caption{TiMINo causality}
\begin{algorithmic}[1]
   \STATE {\bfseries Input:} Samples from a $d$-dimensional time series of length $T$: $(\X_1, \ldots, \X_T)$, maximal order $p$\vspace{0.0cm}
   \STATE $S:=(1, \ldots, d)$ 
   \REPEAT
   \FOR{$k$ in S}
   \STATE Fit TiMINo for $X^k_t$ using $X^k_{t-p}, \ldots,$ $X^k_{t-1}, X^i_{t-p}, \ldots, X^i_{t-1}, X^i_t$ for $i \in S \setminus \{k\}$% (and $X^i_t$ for instantaneous effects).
   \STATE Test if residuals are indep. of $X^i, i \in S$.  
   \ENDFOR
   \STATE Choose $k^*$ to be the $k$ with the weakest dependence. (If there is no $k$ with independence, break and output: ``I do not know - bad model fit'').
   \STATE $S:=S \setminus \{k^*\}$
   \STATE $\mathrm{pa}(k^*):=S$
   \UNTIL{length(S)=1}
   \STATE For all $k$ remove all unnecessary parents.\vspace{0.0cm}
%   \STATE Check the remaining indep. to verify the model.
%   \IF{model correct}
   \STATE {\bfseries Output:} $(\mathrm{pa(1)}, \ldots, \mathrm{pa(d)})$
%   \ELSE
%   \STATE {\bfseries Output:} I do not know - bad model fit.
%   \ENDIF
\end{algorithmic}
\end{algorithm}

Depending on the assumed model class, TiMINo causality has to be provided with a fitting method. Here, we chose $\mathtt{ar}$, $\mathtt{gam}$ and $\mathtt{gptk}$ in R (\url{http://www.r-project.org/}) for linear models, generalized additive models, and GP regression, We call the methods TiMINo-linear, TiMINo-gam and TiMINo-GP, respectively. For the first two AIC determines the order of the process. 
%\citet{Peters2011} show that these three model classes satisfy assumption (i) from Theorem \ref{thm:timino}.
All fitting methods are used in a ``standard way''. For $\mathtt{gam}$ we used the built-in nonparametric smoothing splines.
%(indicated by option 's'). 
For the GP we used zero mean, squared exponential covariance function and Gaussian Likelihood. The hyper-parameters are automatically chosen by marginal likelihood optimization.

%The algorithm also requires an independence test for time series. 
To test for independence between a residual time series $N^k_t$ and another time series $X_t^i, i \in S$, we shift the latter time series up to the maximal order $\pm p$ (but at least up to $\pm 4$); for each of those combinations we perform HSIC \citep{GreFukTeoSonetal08}, an independence test for iid data. 
% that is also able to detect non-linear relationships.
One could also use a test based on cross-correlation that can be derived from Thm 11.2.3. in \citep[][]{brockwelldavis}. 
%JOURNAL: genauer beschreiben
This is related to what is done in transfer function modeling \citep[e.g. \S $13.1$ in][]{brockwelldavis}, which is restricted to two time series and linear functions. But testing for cross-correlation is often not enough: if no time structure is present (iid data), it is obvious that correlation tests are most often insufficient. Also, experiments 1 and 5 describe situations, in which cross-correlations fail.
To reduce the running time, however, one can use cross-correlation to determine the graph structure and use HSIC as a final model check. For HSIC we used a Gaussian kernel; as in \citep{GreFukTeoSonetal08}, the bandwidth is chosen to be the median distance of the input data. This is a heuristic but well-established choice.

Note that any other fitting method and independence test can be used as well. Although they work well in practice, we do not claim that our choices are optimal. 

\subsection{Partial causal discovery} \label{sec_partial}
Let $\X_t$ ``almost'' satisfy a TiMINo model, that is some time series are unobserved or some functional relationships are not included in the model. We expect that the full discovery method remains undecided. One can modify the method such that it tries to discover parts of the causal graph:
Whenever no $k$ with independent residuals is found in line 8 of Algorithm~1 one subtracts a subset $S_0$ from the current version of $S$ (first subtract one element, then any combination of two etc.) and repeat. If the method is able to fit a TiMINo model using only the remaining set $S \setminus S_0$, output this solution and $S_0$, which has been excluded. Since there are $2^{\#S}$ subsets, this is only feasible for small $S$ %A theoretical treatment is missing in this paper, but
(see Exp. 6). This method may also be useful for the iid case; its theoretical properties remain to be investigated.

\subsection{Weaknesses} \label{sec:wea}
(i) In principle, it may happen that the model assumption are violated, but one can nevertheless fit a model in the wrong direction (that is why we wrote ``remaining {\it mostly} undecided''). This requires an ``unnatural'' fine tuning of the functions and \citet{Steudel2010} argue that in the case of causal sufficiency it cannot occur if one believes in the ``independence'' of cause and causal mechanism. Also, (i) is relevant only when there are time series without time structure or the data are non-faithful (see Theorem~\ref{thm:timino}). We do not provide a precise analysis of the case with confounders, but analyze this situation empirically in Experiment~1.
(ii) The null hypothesis of the independence test represents independence, although the scientific discovery of a causal relationship should rather be the alternative hypothesis.
This fact may lead to wrong causal conclusions (instead of ``I do not know'') on small data sets since we cannot reject independence for the wrong direction.
This effect is strengthened by the Bonferroni correction of the HSIC based independence test. This may require modifications, when the number of time series is very high. 
It is thus useful to develop heuristics for ``minimal'' sample sizes.
(iii) For large sample sizes, even smallest differences between the true data generating process and the model may lead to rejected independence tests \citep[discussed by][]{Peters2011d}.

\section{Experiments} \label{sec_exp}
Code is available in the suppl. mat. and will be online.
\subsection{Artificial Data}

We always included instantaneous effects,  
fitted models up to order $p=2$ or $p=6$ and set $\alpha=0.05$.

{\bf Experiment 1: Confounder with time lag.}
We simulate $100$ data sets (length $1000$) from
%JOURNAL
%\begin{align*}
%Z_t&=a\cdot Z_{t-1}+\eps_{Z,t}\\
%X_t&=0.6\cdot X_{t-1}+0.5\cdot Z_{t-1}+\eps_{X,t}\\
%Y_t&=0.6\cdot Y_{t-1}+0.5\cdot Z_{t-2}+\eps_{Y,t}
%\end{align*} 
$Z_t=a\cdot Z_{t-1}+\eps_{Z,t},
X_t=0.6\cdot X_{t-1}+0.5\cdot Z_{t-1}+\eps_{X,t},
Y_t=0.6\cdot Y_{t-1}+0.5\cdot Z_{t-2}+\eps_{Y,t},$
with $a$ between $0$ and $0.95$ and $\eps_{\cdot, t} \sim 0.4 \cdot \mathcal N(0,1)^3$. Here, $Z$ is a hidden common cause for $X$ and $Y$.
%For $a=0$ the confounder time series is white noise. Because of Gaussianity and linearity the model then cannot be distinguished from $X$ causing $Y$ (note that \citet{Hoyer2008} showed this for $X$ and $Y$ also being white noise).
%warum eigentlich????
%Since there is no TiMINo model just including $X$ and $Y$, 
For all $a$, $X_t$ contains information about $Z_{t-1}$ and $Y_{t+1}$ (see Figure~\ref{fig:tg2}); G-causality and TS-LiNGAM wrongly infer $X \rightarrow Y$. For large $a$, $Y_t$ contains additional information about $X_{t+1}$, 
%JOURNAL
%exploiting $Z_{t-2}$ and $Z_{t}$ 
which leads to the wrong arrow $Y \rightarrow X$. TiMINo causality does not decide for any $a$. 
%We only show the linear methods. 
The nonlinear methods perform very similar (not shown). Note that for $a=0$, a cross-correlation test is not enough to reject $X \rightarrow Y$. Further, all methods fail for $a=0$ and Gaussian noise.

%JOURNAL?
%For $a=0$ the model is distinguishable from $X \rightarrow Y$ only for non-Gaussian noise 
%If it were Gaussian, the time series could also be fit without a confounder, but a direct link from $X_t$ to $Y_{t+1}$ 
%\citep{Shimizu2006}.
\begin{figure*}[t]
\begin{center}
\begin{minipage}{0.40\textwidth}
%\begin{center}
%\begin{tikzpicture}[scale=1.2, line width=0.5pt, inner sep=0.2mm, shorten >=1pt, shorten <=1pt]
%  \small
%\centering
\begin{tikzpicture}[xscale=1.2, yscale=0.8, line width=0.4pt, inner sep=0.2mm, shorten >=1pt, shorten <=1pt]
  \tiny
  \draw (0,2) node(x0) [circle, draw] {$X_{t-2}$};
  \draw (1.5,2) node(x1) [circle, draw] {$X_{t-1}$};
  \draw (3,2) node(x2) [circle, draw] {$\;\;X_{t}\;\,$};
  \draw (4.5,2) node(x3) [circle, draw] {$X_{t+1}$};
  \draw (0,0) node(y0) [circle, draw] {$Y_{t-2}$};
  \draw (1.5,0) node(y1) [circle, draw] {$Y_{t-1}$};
  \draw (3,0) node(y2) [circle, draw] {$\;\;Y_{t}\;\,$};
  \draw (4.5,0) node(y3) [circle, draw] {$Y_{t+1}$};
  \draw (0,1) node(z0) [circle, draw, dashed] {$Z_{t-2}$};
  \draw (1.5,1) node(z1) [circle, draw, dashed] {$Z_{t-1}$};
  \draw (3,1) node(z2) [circle, draw, dashed] {$\;\;Z_{t}\;\,$};
  \draw (4.5,1) node(z3) [circle, draw, dashed] {$Z_{t+1}$};
  \draw[-arcsq] (x0) -- (x1);
  \draw[-arcsq] (x1) -- (x2);
  \draw[-arcsq] (x2) -- (x3);
  \draw[-arcsq] (y0) -- (y1);
  \draw[-arcsq] (y1) -- (y2);
  \draw[-arcsq] (y2) -- (y3);
  \draw[-arcsq, dashed] (z0) -- (x1);
  \draw[-arcsq, dashed] (z1) -- (x2);
  \draw[-arcsq, dashed] (z2) -- (x3);
  \draw[-arcsq, dashed] (z0) -- (y2);
  \draw[-arcsq, dashed] (z1) -- (y3);
  \draw[-arcsq, dashed] (z0) to node [above] {$a$} (z1);
  \draw[-arcsq, dashed] (z1) to node [above] {$a$} (z2);
  \draw[-arcsq, dashed] (z2) to node [above] {$a$} (z3);
\end{tikzpicture}
%\caption{Exp.1: The top left figure shows a part of the causal full time graph with $Z$ as a hidden common cause. TiMINo causality does not decide (top right), whereas G-causality and TS-LiNGAM wrongly infer causal connections between $X$ and $Y$ (bottom).}
%\end{figure}
%\begin{figure}[h]
\vspace{0.2cm}\\
\includegraphics[width=0.99\linewidth]{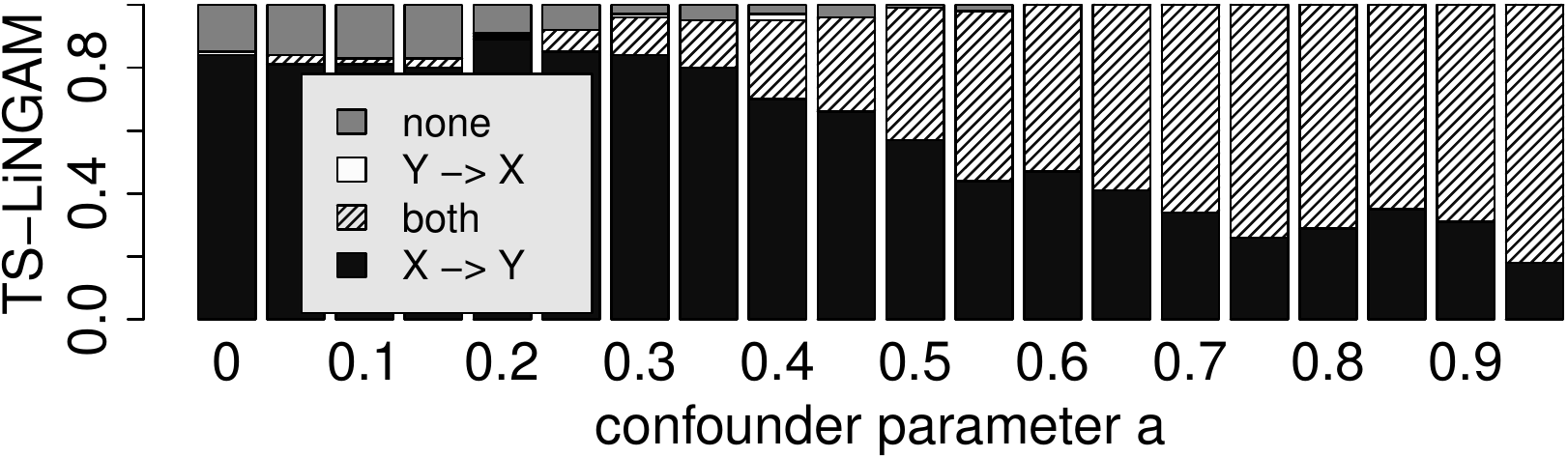}
%\end{center}
\end{minipage}
\hspace{1.2cm}
\begin{minipage}{0.40\textwidth}
%\vspace{0.16cm}
%\caption{Exp.1: The top left figure shows a part of the causal full time graph with $Z$ as a hidden common cause. TiMINo causality does not decide (top right), whereas G-causality and TS-LiNGAM wrongly infer causal connections between $X$ and $Y$ (bottom).}
%\end{figure}
%\begin{figure}[h]
\includegraphics[width=0.99\linewidth]{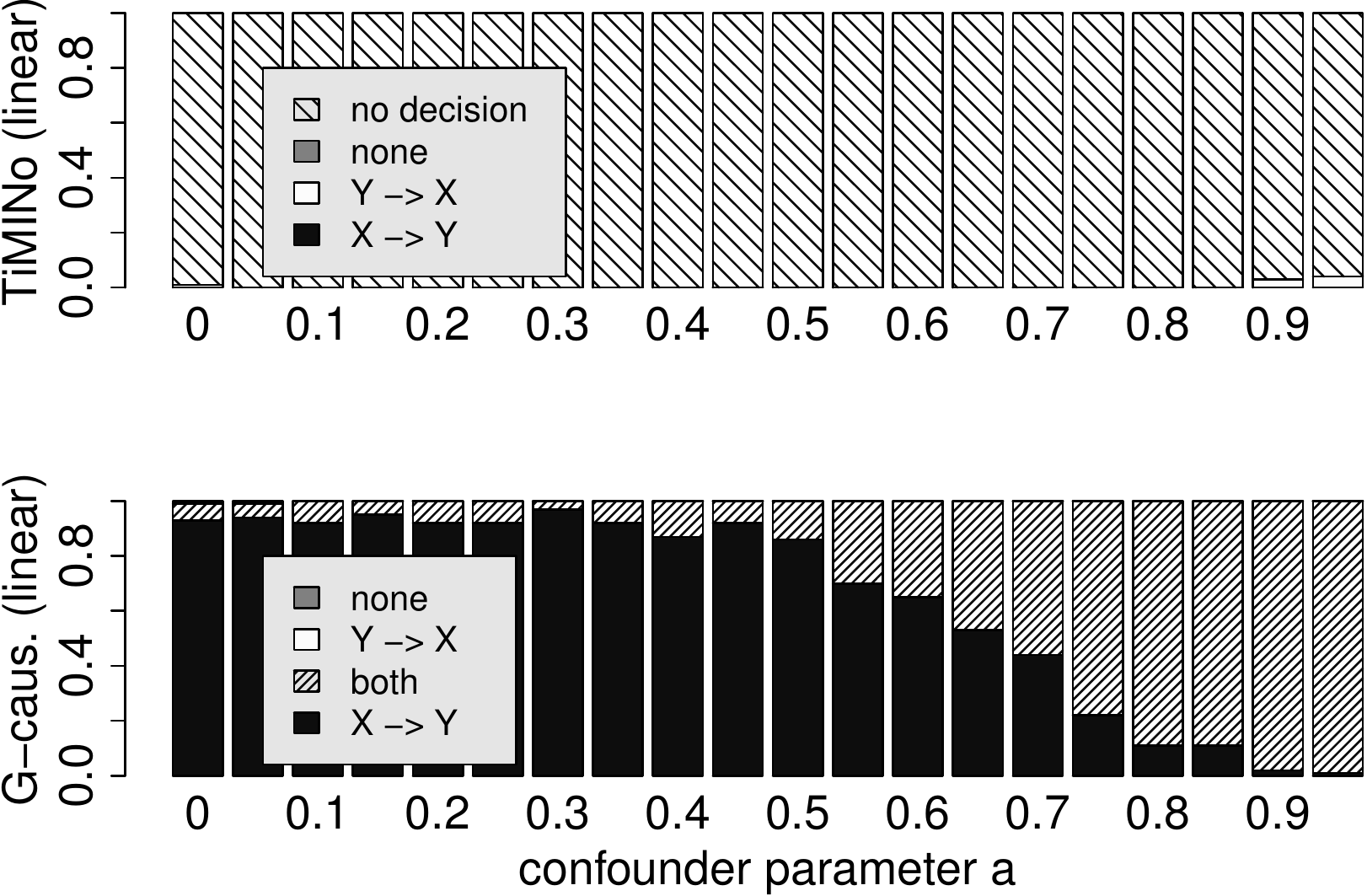}
\end{minipage}
\end{center}
\caption{Exp.1: Part of the causal full time graph with hidden common cause $Z$ (top left). TiMINo causality does not decide (top right), whereas G-causality and TS-LiNGAM wrongly infer causal connections between $X$ and $Y$ (bottom).}
\label{fig:tg2}
\end{figure*}

%JOURNAL??
%\begin{comment}
%\item[(iii)] Non-additive cause, linear interaction\footnote{We use $[a]^b$ as shorthand for $\mathrm{sign}(a)|a|^b$.}.
%\normalsize
%\begin{align*}
%X_t&=b\cdot [X_{t-1}\cdot X_{t-2}]^\frac{1}{3} +\eps_{X,t}\\
%Y_t&=-0.3\cdot Y_{t-1} +0.9 \cdot X_{t-1}+\eps_{Y,t}
%\end{align*} 
%\normalsize
%This time the dynamics of the cause $X$ are non-additive. Figure~\ref{fig_nonadd_lin} shows the results of linear TiMINo, linear and nonlinear G-causality. Granger causality wrongly infers $Y\rightarrow X$ because $Y$ can help to improve the (wrong) model for $X$.
%\begin{figure}
%\centering\includegraphics[width=0.99\linewidth]{exp_nonadd_lin}
%\caption{Non-additive cause, linear interaction: Because G-causality does not model the cause correctly, the past of $Y_t$ helps for predicting $X_t$: In many cases G-causality wrongly infers $Y\rightarrow X$ to be correct.}
%\label{fig_nonadd_lin}
%\end{figure}

%\end{comment}

%JOURNAL
%\subsection{Multivariate data} \label{sec:multiv}
{\bf Experiment 2: Linear, Gaussian with instantaneous effects.}
We sample $100$ data sets (length $2000$) from 
%JOURNAL
%\begin{align*}
%X_t&=0.5\cdot X_{t-1}+\eps_{X,t}\\
%W_t&=0.5\cdot W_{t-1}+0.7 \cdot X_{t}+\eps_{W,t}\\
%Y_t&=0.5\cdot Y_{t-1}-0.4 \cdot (W_{t-1}+1)^2+\eps_{Y,t}\\
%Z_t&=0.5\cdot Z_{t-1}+0.8 \cdot W_t+0.8\cdot Y_{t-1}+\eps_{Z,t}
%\end{align*} 
$X_t=A_1\cdot X_{t-1}+\eps_{X,t},
W_t=A_2\cdot W_{t-1}+A_3\cdot X_{t}+\eps_{W,t},
Y_t=A_4\cdot Y_{t-1}+A_5\cdot W_{t-1}+\eps_{Y,t},
Z_t=A_6\cdot Z_{t-1}+A_7\cdot W_t+A_8\cdot Y_{t-1}+\eps_{Z,t}
$
and $\eps_{\cdot, t} \sim 0.4 \cdot \mathcal N(0,1)$ and $A_i$ iid from $\mathcal{U}([-0.8,-0.2]\cup [0.2,0.8])$. We regard the graph containing $X \rightarrow W \rightarrow Y \rightarrow Z$ and $W \rightarrow Z$ as correct. TS-LiNGAM and G-causality are not able to recover the true structure (see Table \ref{tab:multiv_lin}).
%Although instantaneous effects are present, TiMINo is able to recover the correct DAG in $93$ cases, gives $2$ wrong answers and remains undecided $5$ times. G-causality is not capable for dealing with instantaneous effects and both versions give $100$ wrong answers.\\
%If all linear coefficients are chosen uniformly between $0.1$ and $0.9$ (with randomly chosen signs), TiMINo performs a bit worse and outputs 
%G-causality: 13   87
%$74$ correct, $15$ wrong and $11$ undecided answers. language!!! Making the method more robust could be addressed in future work.
%exp_multiv_linear_gauss_instant
\begin{table}[h]
%\begin{minipage}[b]{0.45\textwidth}
\caption{Exp.2: %TS-LiNGAM and G-causality are not able to deal with the linear instantaneous effects since the variables are Gaussian distributed. 
Gaussian data and linear instantaneous effects: only TiMINo mostly discovers the correct DAG.}
\label{tab:multiv_lin}
\begin{center}
\small
\begin{tabular}{r||c|c|c}
DAG&lin. Granger&TiMINo-lin&TS-LiNGAM\\\hline \hline
correct&$13\%$&$83\%$&$19\%$\\\hline
wrong&$87\%$&$7\%$&$81\%$\\\hline
%&&&\\\hline
no dec.&$0\%$&$10\%$&$0\%$
\end{tabular}
\normalsize
\end{center}
\end{table}
%\addtocounter{figure}{1}
%
%results!!
%Timino linear: 95,3,2
%g-causal: 10, 90, 0
%ts-lingam: 26, 74 
%
%\begin{table}[h]
%\begin{tabular}{r||c|c|c}
%&linear &nonlinear&\\
%DAG&G-causality&G-causality&TiMINo\\\hline \hline
%correct&$0\%$&$0\%$&$93\%$\\\hline
%wrong&$100\%$&$100\%$&$2\%$\\\hline
%``no dec.''&$0\%$&$0\%$&$5\%$
%\end{tabular}
%\caption{Exp.5: TiMINo most often recovers the correct DAG, whereas G-causality gives wrong answers.}
%\label{tab:instant}
%\end{table}

{\bf Experiment 3: Nonlinear, non-Gaussian without instantaneous effects.}
We simulate $100$ data sets (length $500$) from
%JOURNAL
%\begin{align*}
%\end{align*} 
$X_t=0.8 X_{t-1} + 0.3 \eps_{X,t},
Y_t=0.4 Y_{t-1} + (X_{t-1}-1)^2 +0.3\eps_{Y,t},
Z_t=0.4 Z_{t-1} + 0.5 \cos(Y_{t-1}) + \sin(Y_{t-1}) +0.3\eps_{Z,t},$
with $\eps_{\cdot, t} \sim \mathcal U([-0.5,0.5])$ (similar results for other noise distributions, e.g. exponential).
%\normalsize
Thus, $X \rightarrow Y \rightarrow Z$ is the ground truth. Nonlinear G-causality fails since the implementation is only pairwise and it thus always infers an effect from $X$ to $Z$. Linear G-causality cannot remove the nonlinear effect from $X_{t-2}$ to $Z_t$ by using $Y_{t-1}$ and gives many wrong answers. Also TiMINo-linear assumes a wrong model, but does not make any decision. TiMINo-gam and TiMINo-GP work well on this data set (Table~\ref{tab:multiv_nonlin}). This specific choice of parameters show that a significant difference in performance is possible. For other parameters (e.g. less impact of the nonlinearity), G-causality and TS-LiNGAM still assume a wrong model but make fewer mistakes.

\begin{table*}[t]
\caption{Exp.3: Since the data are nonlinear, linear G-causality and TS-LiNGAM give wrong answers, TiMINo-lin does not decide. Nonlinear G-causality fails because it analyzes the causal structure between pairs of time series.}
\label{tab:multiv_nonlin}
\begin{center}
\small
\begin{tabular}{r||c|c|c|c|c|c}
DAG&Granger (lin)&Granger (nonlin)&TiMINo (lin)&TiMINo (gam)&TiMINo (GP)&TS-LiNGAM\\\hline \hline
correct&$69\%$&$  0\%$&$  0\%$&$95\%$&$94\%$&$12\%$\\\hline
wrong  &$31\%$&$100\%$&$  0\%$&$1\%$&$1\%$&$88\%$\\\hline
no dec.&$0\%$&$   0\%$&$100\%$&$ 4\%$&$5\%$&$ 0\%$
\end{tabular}
\end{center}
\end{table*}
\normalsize

{\bf Experiment 4: Non-additive interaction.}
%\normalsize
We simulate $100$ data sets with different lengths from
%JOURNAL
%\begin{align*}
%X_t&=0.8 \cdot \log(0.5\cdot |X_{t-1} + X_{t-2}|) +\eps_{X,t}\\
%Y_t&=0.2 \cdot Y_{t-1} +0.9 \cdot [X_{t-1} \cdot X_{t-2}]^\frac{1}{3}  +\eps_{Y,t}
%\end{align*} 
$X_t=0.2 \cdot X_{t-1} +0.9\eps_{X,t},
Y_t=-0.5 + \exp(-(X_{t-1} + X_{t-2})^2)+0.1\eps_{Y,t}$,
with $\eps_{\cdot, t} \sim \mathcal N(0,1)$.
%\normalsize
Figure~2 shows that TiMINo-linear and TiMINo-gam remain mainly undecided, whereas TiMINo-GP performs well. For small sample sizes, one observes two effects: GP regression does not obtain accurate estimates for the residuals, these estimates are not independent and thus TiMINo-GP remains more often undecided. Also, TiMINo-gam makes more correct answers than one would expect due to more type II errors.
%In order to apply such a general model to real world time series with high order one has to further improve the fitting techniques.
Linear G-causality and TS-LiNGAM give more than $90\%$ incorrect answers, but non-linear G-causality is most often correct (not shown). Bad model assumptions do not {\it always} lead to incorrect causal conclusions.
\begin{figure}[h]
\centering
\includegraphics[width=0.63\linewidth]{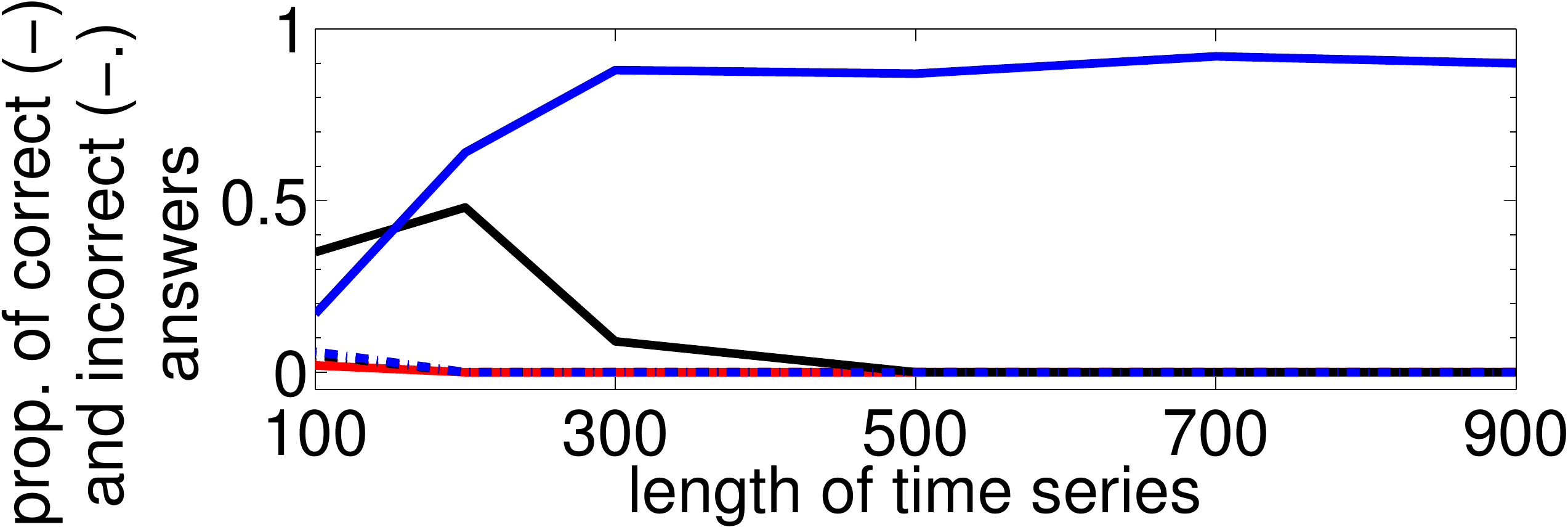}
\caption{Exp.4: TiMINo-GP (blue) works reliably for long time series. TiMINo-linear (red) and TiMINo-gam (black) mostly remain undecided.}
\label{fig:fig1}
\end{figure}
%\begin{comment}
%\begin{table}[h]
%\begin{minipage}{0.49\textwidth}
%\begin{tabular}{r||c|c|c}
%&linear &nonlinear&\\
%&G-causal.&G-causal.&TiMINo\\\hline \hline
%$X \rightarrow Y$&$83\%$&$49\%$&$0\%$\\\hline
%$X \leftarrow Y$&$0\%$&$0\%$&$0\%$\\\hline
%$X \leftrightarrow Y$&$16\%$&$51\%$&$0\%$\\\hline
%no dec.&$0\%$&$0\%$&$100\%$
%\end{tabular}
%\caption{Exp.3: Since the model assumptions are violated, G-causality is most often incorrect. TiMINo remains undecisive (``I do not know.'').}
%\label{tab:nonadd}
%\end{minipage}
%\hfill
%\begin{minipage}{0.49\textwidth}
%\begin{tabular}{r||c|c|c}
%&linear &nonlinear&\\
%DAG&G-causal.&G-causal.&TiMINo\\\hline \hline
%correct&$1\%$&$0\%$&$81\%$\\\hline
%wrong&$99\%$&$100\%$&$4\%$\\\hline
%&&&\\\hline
%no dec.&$0\%$&$0\%$&$15\%$
%\end{tabular}
%\caption{Exp.4: The correct DAG $X \rightarrow Y \rightarrow Z$ is mostly recovered by TiMINo. G-causality gives almost always a wrong answer.}
%\label{tab:multiv_nonlin2}
%\end{minipage}
%\end{table}

%\end{comment}

{\bf Experiment 5: Non-linear Dependence of Residuals.}
In Experiment~1, TiMINo equipped with a cross-correlation inferred a causal edge, although there were none. The opposite is also possible:
$X_t=-0.5\cdot X_{t-1}+\eps_{X,t},
Y_t=-0.5\cdot Y_{t-1}+ X_{t-1}^2 +\eps_{Y,t}$
and $\eps_{\cdot, t} \sim 0.4 \cdot \mathcal N(0,1)$ (length $1000$). TiMINo-gam with cross-correlation infers no causal link between $X$ and $Y$, whereas TiMINo-gam with HSIC correctly identifies $X \rightarrow Y$.

{\bf Experiment 6: Partial Causal Discovery.}
We sample $100$ data sets (length $600$) from 
%JOURNAL
%\begin{align*}
%X_t&=0.5\cdot X_{t-1}+\eps_{X,t}\\
%B_t&=0.5\cdot B_{t-1}+\eps_{B,t}\\
%A_t&=0.5\cdot A_{t-1}+0.5 \cdot B_{t-1}+\eps_{A,t}\\
%Y_t&=0.5\cdot Y_{t-1}-0.9 \cdot X_{t-1}+0.8\cdot B_{t-1}+\eps_{Y,t}\\
%W_t&=0.5\cdot W_{t-1}+0.8 \cdot X_{t-1}+\eps_{W,t}
%\end{align*} 
$X_t=0.5\cdot X_{t-1}+\eps_{X,t},
B_t=0.5\cdot B_{t-1}+\eps_{B,t},
A_t=0.5\cdot A_{t-1}+0.5 \cdot B_{t-1}+\eps_{A,t},
Y_t=0.5\cdot Y_{t-1}-0.9 \cdot X_{t-1}+0.8\cdot B_{t-1}+\eps_{Y,t},
W_t=0.5\cdot W_{t-1}+0.8 \cdot X_{t-1}+\eps_{W,t}$
and $\eps_{\cdot, t} \sim 0.4 \cdot \mathcal U([-0.5,0.5])$.
Let $X_t$ be latent. 
%Figure~\ref{fig:partial} shows the DAG that we consider to be the ground truth (left).
The standard method %looks for time series that satisfy a TiMINo model given all other observed variables. Thus it 
finds $A_t$ as a ``sink time series'' and halts in iteration two (line $8$ in Algorithm~1%\ref{alg}
). Instead of outputting ``I do not know'',
the partial discovery method described in Section~\ref{sec_partial} is able to correctly infer this DAG (see Figure~\ref{fig:partial}) in $82\%$ of the cases ($18\%$ wrong answers). G-causality and TS-LiNGAM give only wrong answers.% (we consider $A \leftarrow B \rightarrow Y$ to be the correct DAG, although this is debatable).\begin{table}[h]
%**************1
%\begin{table}[ht]
%\begin{tabular}{r||c|c}
%&linear &\\
%DAG&G-causality&TiMINo\\\hline \hline
%expected&$0\%$&$82\%$\\\hline
%wrong&$100\%$&$18\%$\\\hline
%``no dec.''&$0\%$&$0\%$
%\end{tabular}
%\caption{Exp.6: TiMINo outputs the expected DAG in $82\%$ of the cases. G-causality adds an edge between $W$ and $Y$ and thus outputs a wrong result.}
%\label{tab:partial} 
%\end{table}

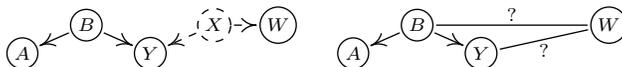
\begin{figure}[h]
\begin{center}
\begin{tikzpicture}[xscale=.85, yscale=0.75, line width=0.5pt, inner sep=0.5mm, shorten >=1pt, shorten <=1pt]
  \scriptsize
  \draw (0,0) node(a) [circle, draw] {$A$};
  \draw (1,0.5) node(b) [circle, draw] {$B$};
  \draw (2,0) node(y) [circle, draw] {$Y$};
  \draw (3,0.5) node(x) [circle, draw, dashed] {$X$};
  \draw (4,0.5) node(w) [circle, draw] {$W$};
  \draw[-arcsq] (b) -- (a);
  \draw[-arcsq] (b) -- (y);
  \draw[-arcsq, dashed] (x) -- (y);
  \draw[-arcsq, dashed] (x) -- (w);
\end{tikzpicture}
\hspace{0.3cm}
\begin{tikzpicture}[xscale=.85, yscale=0.75,line width=0.5pt, inner sep=0.5mm, shorten >=1pt, shorten <=1pt]
  \scriptsize  
  \draw (0,0) node(a) [circle, draw] {$A$};
  \draw (1,0.5) node(b) [circle, draw] {$B$};
  \draw (2,0) node(y) [circle, draw] {$Y$};
  \draw (4,0.5) node(w) [circle, draw] {$W$};
  \draw[-arcsq] (b) -- (a);
  \draw[-arcsq] (b) -- (y);
  \draw[bend right=0] (y) to node [below] {?} (w);
  \draw (b) to node [above] {?} (w);
\end{tikzpicture}
\end{center}
\caption{Exp.6: The true causal summary time graph (left) cannot be recovered because $X_t$ is unobserved. TiMINo gives a partial result (right).}
\label{fig:partial}
\end{figure}

\subsection{Real Data}
We fitted up to order $6$ and included instantaneous effects. For TiMINo, ``correct'' means that TiMINo-gam makes the correct decision and TiMINo-linear is correct or undecided. TiMINo-GP always remains undecided because there are too few data points to fit such a general model. Again, $\alpha$ is set to $0.05$. 

{\bf Experiment 7: Gas Furnace.}
\citep[][length $296$]{Box2008},
%JORUNAL
%can be found online, published by
$X_t$ describes the input gas rate and $Y_t$ the output $\text{CO}_2$. We regard $X \rightarrow Y$ as being true. TS-LiNGAM, G-causality, TiMINo-lin and TiMINo-gam correctly infer $X \rightarrow Y$. Disregarding time information leads to a wrong causal conclusion: The method described by \citet{Hoyer2008} leads to a $p$-value of $4.8\%$ in the correct and $9.1\%$ in the false direction.

{\bf Experiment 8: Old Faithful.}
\citep[][length $194$]{Azzalini1990} $X_t$ contains the duration of an eruption and
$Y_t$ the time interval to the next eruption of the Old Faithful geyser.
% in Yellowstone National Park (USA). 
We regard $X \rightarrow Y$ as the ground truth. Although the time intervals $[t, t+1]$ do not have the same length for all $t$, we model the data as two time series. TS-LiNGAM and TiMINo give correct answers, whereas linear G-causality  
infers $X \rightarrow Y$, and nonlinear G-causality infers $Y \rightarrow X$.

{\bf Experiment 9: Temperature.} (available at \url{https://webdav.tuebingen.mpg.de/cause-effect/}, length $16382$)
$X_t$ are indoor and $Y_t$ outdoor measurements (recorded every $5$ minutes), we expect that there is a causal link $Y \rightarrow X$. TS-LiNGAM wrongly infers $X \rightarrow Y$ and both G-causality methods infer a bidirected arrow. TiMINo remains undecided. Maybe, the data are causal insufficient: time may confound outdoor temperature and the usage of heating, the latter is a direct cause for indoor temperature. Also, $Y$ may cause heating. Such a model does not allow for a TiMINo from $Y$ to $X$.

{\bf Experiment 10: Abalone (no time structure).}
The abalone data set \citep{Asuncion} contains (among others that lead to similar results) age $X_t$ and diameter $Y_t$ of a certain shell fish. If we model $1000$ randomly chosen samples as time series, G-causality (both linear and nonlinear) infers no causal relation as expected. TS-LiNGAM wrongly infers $Y \rightarrow X$, which is probably due to the nonlinear relationship. TiMINo gives the correct result.

{\bf Experiment 11: Diary (confounder).}
We consider $10$ years of weekly prices for butter $X_t$ and cheddar cheese $Y_t$ \citep[][length $522$]{data_dairy}. They are strongly correlated, but we expect this correlation to be due to the (hidden) milk price $M_t$: $X \leftarrow M \rightarrow Y$. TiMINo does not decide, whereas TS-LiNGAM and G-causality 
% X -> Y   5.29e-11
% Y -> X   4.70e-08
wrongly infer $X \rightarrow Y$. This may be due to different time lags of the confounder (cheese has longer storing and maturing times than butter).

%JOURNAL!!
The phase slope index \citep{Nolte2008}
performed well only in Exp.~6, in all other experiments it either gave wrong results or did not decide.
%%: $1$~incorrect and $99$~correct answers. In Experiments~2 and 3, however, it 
%was mostly undecided in Exp.~4 and Exp.~5,
%%in more than $97\%$ of the cases. 
%partially undecided and wrong in Exp.~1 (varying with $a$),
%%ied between $62\%$ (for $a=0$) and $18\%$ (for $a=0.95$); the remaining answers were incorrect. 
%gave only false answers in Exp. 2 and 3 and no significant results on the real data sets (the tendency seemed to vary for different parameters).
%%7: no result
%%8: no result
%%9: prefers out -> in, but not significant
%%10: no result
%%11: prefers cheddar -> butter, but not significant
Due to space constraints we omit details about this method. 

%{\bf Experiment 12: ??? (multivariate).}

% JORUNAL??
%{\bf Experiment 12: Vostok Ice Core.}
%This data set \citep{Fischer1999,Petit1999} is obtained from an ice core that is more than $3000m$ long and was collected at the Vostok research station in the Antartic \citep[available at][]{WDC2012}. The data describe temperature $X_t$ ($3311$ points) and concentration of $CO_2$ in the atmosphere $Y_t$ ($283$ data points). Unfortunately, the time series are neither "synchronous" nor are the time gaps equally spaced. We thus preprocess the data in the following way: fixing the time gaps to be $1000$ years, we interpolate the time series by weihting the corresponding two neighboring time points. This leads to a time series of length $412$. TS-LiNGAM and G-causality (linear and nonlinear) infer both causal directions. TiMINo linear does not decide, which -again- suggests nonlinear relations. TiMINo-gam infers $Y \rightarrow X$. We regard this as an interesting result since using these data our method rejects the hypothesis that the influence from temperature to $CO_2$ is more important than the opposite direction (as sometimes claimed ``climate sceptics''). Note, however, that the interpolation may introduce problems. If we interpolate even more 

%\subsection{Real Data}
\section{Conclusions and Future Work} \label{sec_conclusions}
This paper shows how causal inference benefits from the framework of functional models.
TiMINo causality can be seen as an extension of methods from the iid case, but the benefits compared to other time series methods are substantial and important: 
It comes with an identifiability that is more general than existing results and lead to a practical algorithm that allows for the ability to make no decision instead of a wrong one.
TiMINo is applicable to multivariate, linear, nonlinear and instantaneous interactions and can also discover partial structures. On the data sets considered it outperforms existing methods.
%If the data does not fit the considered model, TiMINo is not able to propose a result. 

%\subsection{Future work}
We think the following investigations would be worthwhile: 
%(1) We believe that any process that satsifies a TiMINo also satisfy the local Markov condition (see Assumption \ref{ass:timino}). However, we have not been able to prove this for nonlinear functions yet.
(1) Applying more complex models (like heteroscedastic models) and preprocessing the data (removing trends, periodicities, etc.) may decrease the number of cases where TiMINo causality is undecided. 
%Whenever stable fitting methods exist, it is straight-forward to use them for TiMINo causality.
(2) Checking for autocorrelations in the residuals is another possible model check and not included yet.
(3) In the case of non-instantaneous feedback loops, one should find a method to fit the model structure that is faster than brute-force search.
%In this form, the method does not contain feedback loops.%, which may be a drawback in practice (note that the method then does not give wrong answers, but outputs ``Bad model fit''). 
%Such an extension would require some modifications of the algorithm. 
%JOURNAL
%(4) 
%TiMINo causality evaluates a model fit by checking independence of the residuals. As in \citet{Mooij2009}, one may make the independence of the residuals as a criterion for the fitting process or at least for order selection.
%(3) In practice, we do not expect a process with many variables to exactly follow an ANM, for example. Thus, one may output a graph that is in some sense ``closest'' to such a model. 
(4)
Although we report promising results, an extensive evaluation of this method on even more real data sets is necessary. This lies beyond the scope of the present conference paper.
%Ideally, all existing methods for causal inference on time series should be evaluated on a benchmark data set of real world systems, where the ground truth is known.

\section{Appendix}
\begin{Lemma} \label{lem:lmc}
If $\X_t=(X^i_t)_{i \in V}$ satisfy a TiMINo model, each variable $X^i_t$ is conditionally independent of each of its non-descendants given its parents.
\end{Lemma}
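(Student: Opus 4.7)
The plan is to recognize that the TiMINo definition specifies a structural causal model on the full time graph, and then to invoke the standard fact that any acyclic SCM with jointly independent noise terms is Markov with respect to its associated DAG.

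First, I would observe that Definition~1 supplies, for every node $X^i_t$ of the full time graph, a structural assignment
\[
X^i_t = f_i\bigl((\PA[i]{p})_{t-p}, \ldots, (\PA[i]{0})_{t}, \eps^i_t\bigr),
\]
where the graph obtained by drawing arrows from every right-hand-side variable to $X^i_t$ is exactly the full time graph, which is acyclic by assumption. The noises $\eps^i_t$ are jointly independent across $i$ and $t$. Hence the collection of these structural equations is a bona fide (possibly infinite) recursive SEM on the full time graph, and the joint distribution of $(X^i_t)$ is the pushforward of the independent noise distribution under the induced solution map.

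Next I would reduce the claim to the finite case. Fix a node $X^i_t$ and any finite collection $B$ of its non-descendants. The ancestor set of $\{X^i_t\} \cup B \cup \PA[]{}(X^i_t)$ in the full time graph is finite by acyclicity together with the bounded lag $p$ (each variable has finitely many parents), so one can restrict attention to this finite sub-DAG $\mathcal{G}'$; the relevant marginal distribution is determined solely by the structural equations indexed by $\mathcal{G}'$, driven by the corresponding finite sub-family of independent noises. On this finite sub-DAG the standard result for recursive SEMs applies: jointly independent noises plus the structural assignments imply that the induced distribution is Markov with respect to $\mathcal{G}'$ (this is a direct consequence of d-separation soundness for SCMs, see e.g.\ Pearl 2009). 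Since a node is always d-separated from its non-descendants by its parents, we obtain $X^i_t \independent B \mid \PA[]{}(X^i_t)$. As $B$ was an arbitrary finite subset of non-descendants, the local Markov property follows.

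The main technical point to handle carefully is the reduction from the infinite full time graph to a finite sub-DAG, so that one is justified in invoking the classical finite-SEM Markov theorem rather than proving a Markov property for an infinite system directly. Everything else is a routine application of the fact that independence of noise terms combined with acyclic structural equations yields d-separation soundness, hence in particular the local Markov property asserted by the lemma.
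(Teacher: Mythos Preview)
Your approach is sound and is a standard route to the local Markov property for an SCM. The paper takes a more elementary, in-situ argument: it conditions on $\mathcal{S}=\PA[]{}(X^i_t)$, notes that then $X^i_t = f_i(s,\eps^i_t)$ depends only on the fresh noise $\eps^i_t$, and observes that any non-descendant is a function of the noise variables of its own ancestors, hence a function of noises other than $\eps^i_t$; independence then follows from joint independence of the $\eps$'s. In effect the paper re-derives the relevant piece of the SEM Markov theorem directly, whereas you reduce to a finite sub-DAG and invoke the theorem as a black box. Both arrive at the same place; yours is cleaner to state, the paper's is more self-contained.

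One point in your reduction needs repair. You justify finiteness of the ancestor set of $\{X^i_t\}\cup B\cup \PA[]{}(X^i_t)$ by ``acyclicity together with the bounded lag $p$ (each variable has finitely many parents).'' Acyclicity and locally finite in-degree alone do \emph{not} force finite ancestor sets: the chain $X_t = X_{t-1}+\eps_t$ over $t\in\Z$ is acyclic with one parent per node, yet every node has infinitely many ancestors. What actually makes the ancestor set finite here is that the time index ranges over $\N$, so the recursion bottoms out at a first time step. The paper flags exactly this: ``For this proof it is crucial that we consider time series for $t\in\N$.'' Once you insert that observation, your reduction to a finite SCM is valid and the appeal to the standard Markov theorem goes through.
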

\begin{proof}
With 
$
\mathcal S:=\PA[]{}(X^i_t)=\bigcup_{k=0}^p (\PA[i]{k})_{t-k}
$ 
and equation \eqref{anm} we get 
$
X^i_t |_{\mathcal S=s} = f_{i}(s, \eps^i_t)
$
for an $s$ with $p(s)>0$. Any non-descendant of $X^i_t$ can be written as a function of all noise variables from its ancestors and is therefore independent of $X^i_t$ given $\C{S}=s$.
For this proof it is crucial that we consider time series for $t \in \N$. We believe that a similar statement holds for $t \in \Z$, which only introduces technical difficulties.  
\end{proof}
\begin{proof}[of Theorem \ref{thm:timino}]
Suppose that $\X_t$ allows two different representations of TiMINo that lead to two different full time graphs $\mathcal G$ and $\mathcal G'$.
(i) First we assume that $\mathcal G$ and $\mathcal G'$ do not differ in the instantaneous effects: $\PA[i]{0} (\text{in }\mathcal G)=\PA[i]{0}(\text{in }\mathcal G')\; \forall i$. Without loss of generality, there is some $k>0$ and an edge $X^1_{t-k} \rightarrow X^2_t$, say, that is in $\mathcal G$ but not in $\mathcal G'$. 
From $\mathcal G'$ and Lemma \ref{lem:lmc} we have that
$
X^1_{t-k} \independent X^2_t\,|\,\mathcal S\,,
$
%this also uses the weak union property of cond. independence, e.g. pearl 1.1.5 
where $\mathcal S=(\{X^i_{t-l}, 1 \leq l \leq p, i \in V\} \cup \B{ND}_t)\setminus \{X^1_{t-k}, X^2_t\}$, and $\B{ND}_t$ are all $X^i_t$ that are non-descendants (wrt instantaneous effects) of $X^2_t$.
Applied to $\mathcal G$, 
%equation \eqref{eq:cm} 
causal minimality
leads to a contradiction:
$
X^1_{t-k} \notindependent X^2_t\,|\,\mathcal S\,.
$
Now we suppose $\mathcal G$ and $\mathcal G'$ differ in the instantaneous effects. This time we choose $\mathcal S=\{X^i_{t-l}, 1 \leq l \leq p, i \in V\}$. Then for each $s$ and $i$ we have:
$
X^i_t |_{\mathcal S=s}=f_i(s, (\tPA[i]{0})_t) 
$, 
where $\tPA[i]{0}$ are all instantaneous parents of $X^i_t$ conditioned on $\mathcal S=s$. All $X^i_t |_{\mathcal S=s}$ with the instantaneous effects describe two different structures of an IFMOC. This contradicts the identifiability results by \citet{Peters2011}.
%In the case of finite graphs, \citet{Pearl2009} shows as Theorem 1.4.1 that the local Markov property is satisfied. In the case of infinite graphs as the full time graph, however, one encounters some technical problems: Although the local Markov assumption seems plausible, we were not able to prove it.
%\begin{proposition} \label{timino_markov}
%The full time graph obtained from the procedure above satisfies the local Markov condition, i.e. each node is independent of its non-descendants given its parents.
%\end{proposition}
%???This proves that a TiMINo model entails stronger conditions than the idea of CI causality, which only assumes that the full time graph satisfies the causal Markov condition. Therefore a TiMINo model inherits at least the identifiability results from CI causality. %Even more, we also have identifiability of the instantaneous effects (in the generic case).
(ii) Because of Lemma \ref{lem:lmc} and faithfulness $\mathcal G$ and $\mathcal G'$ only differs in the instantaneous effects. But each instantaneous arrow $X_t^i \rightarrow X_t^j$ forms a $v$-structure together with $X_{t-k}^j \rightarrow X_t^j$; the latter exists because of the time structure and $X_{t-k}^j$ cannot be connected with $X_t^i$ since this introduces a cycle in the summary time graph. 
\end{proof}

%\small

\bibliographystyle{plainnat}
\bibliography{bibliography}

\begin{thebibliography}{26}
\providecommand{\natexlab}[1]{#1}
\providecommand{\url}[1]{\texttt{#1}}
\expandafter\ifx\csname urlstyle\endcsname\relax
  \providecommand{\doi}[1]{doi: #1}\else
  \providecommand{\doi}{doi: \begingroup \urlstyle{rm}\Url}\fi

\bibitem[Ancona et~al.(2004)Ancona, Marinazzo, and Stramaglia]{Ancona2004}
N.~Ancona, D.~Marinazzo, and S.~Stramaglia.
\newblock Radial basis function approach to nonlinear granger causality of time
  series.
\newblock \emph{Phys. Rev. E}, 70\penalty0 (5):\penalty0 056221, 2004.

\bibitem[Asuncion and Newman(2007)]{Asuncion}
A.~Asuncion and D.~J. Newman.
\newblock {UCI} repository.
\newblock \url{http://archive.ics.uci.edu/ml/}, 2007.

\bibitem[Azzalini and Bowman(1990)]{Azzalini1990}
A.~Azzalini and A.~W. Bowman.
\newblock A look at some data on the {O}ld {F}aithful {G}eyser.
\newblock \emph{Applied Statistics}, 39\penalty0 (3):\penalty0 357--365, 1990.

\bibitem[Bell et~al.(1996)Bell, Kay, and Malley]{Bell1996}
D.~Bell, J.~Kay, and J.~Malley.
\newblock A non-parametric approach to non-linear causality testing.
\newblock \emph{Economics Letters}, 51\penalty0 (1):\penalty0 7--18, 1996.

\bibitem[Bergsma(2004)]{Bergsma2004}
W.~P. Bergsma.
\newblock \emph{Testing conditional independence for continuous random
  variables}, 2004.
\newblock EURANDOM-report 2004-049.

\bibitem[Box et~al.(2008)Box, Jenkins, and Reinsel]{Box2008}
G.~E.~P. Box, G.~M. Jenkins, and G.~C. Reinsel.
\newblock \emph{Time series analysis: forecasting and control}.
\newblock Wiley series in probability and statistics. John Wiley, 2008.

\bibitem[Brockwell and Davis(1991)]{brockwelldavis}
P.~J. Brockwell and R.~A. Davis.
\newblock \emph{Time Series: Theory and Methods}.
\newblock Springer, 2nd edition, 1991.

\bibitem[Chen et~al.(2004)Chen, Rangarajan, Feng, and Ding]{Chen2004}
Y.~Chen, G.~Rangarajan, J.~Feng, and M.~Ding.
\newblock Analyzing multiple nonlinear time series with extended granger
  causality.
\newblock \emph{Physics Letters A}, 324, 2004.

\bibitem[Chu and Glymour(2008)]{Chu2008}
T.~Chu and C.~Glymour.
\newblock Search for additive nonlinear time series causal models.
\newblock \emph{Journal of Machine Learning Research}, 9:\penalty0 967--991,
  2008.

\bibitem[Eichler(2011)]{Eichler2011}
M.~Eichler.
\newblock Graphical modelling of multivariate time series.
\newblock \emph{Probability Theory and Related Fields}, pages 1--36, 2011.

\bibitem[Florens and Mouchart(1982)]{Florens1982}
J.~P. Florens and M.~Mouchart.
\newblock A note on noncausality.
\newblock \emph{Econometrica}, 50\penalty0 (3):\penalty0 583--591, 1982.

\bibitem[Gould(2007)]{data_dairy}
Brian~W. Gould.
\newblock Diary data sets.
\newblock \url{http://future.aae.wisc.edu/tab/prices.html}, 2007.

\bibitem[Granger(1969)]{Granger1969}
C.~W.~J. Granger.
\newblock Investigating causal relations by econometric models and
  cross-spectral methods.
\newblock \emph{Econometrica}, 37\penalty0 (3):\penalty0 424--38, July 1969.

\bibitem[Gretton et~al.(2008)Gretton, Fukumizu, Teo, Song, Sch{\"o}lkopf, and
  Smola]{GreFukTeoSonetal08}
A.~Gretton, K.~Fukumizu, C.~H. Teo, L.~Song, B.~Sch{\"o}lkopf, and A.~Smola.
\newblock A kernel statistical test of independence.
\newblock In \emph{NIPS 20}, Canada, 2008.

\bibitem[Hastie and Tibshirani(1990)]{Hastie1990}
T.~J. Hastie and R.~J. Tibshirani.
\newblock \emph{Generalized additive models}.
\newblock London: Chapman \& Hall, 1990.

\bibitem[Hoyer et~al.(2009)Hoyer, Janzing, Mooij, Peters, and
  Sch\"olkopf]{Hoyer2008}
P.~Hoyer, D.~Janzing, J.~Mooij, J.~Peters, and B.~Sch\"olkopf.
\newblock Nonlinear causal discovery with additive noise models.
\newblock In \emph{NIPS 21}, Canada, 2009.

\bibitem[Hyv\"{a}rinen et~al.(2008)Hyv\"{a}rinen, Shimizu, and
  Hoyer]{Hyvarinen2008}
A.~Hyv\"{a}rinen, S.~Shimizu, and P.~O. Hoyer.
\newblock Causal modelling combining instantaneous and lagged effects: an
  identifiable model based on non-gaussianity.
\newblock In \emph{ICML 25}, 2008.

\bibitem[Janzing and Steudel(2010)]{Steudel2010}
D.~Janzing and B.~Steudel.
\newblock Justifying additive-noise-model based causal discovery via
  algorithmic information theory.
\newblock \emph{Open Systems and Information Dynamics}, 17:\penalty0 189--212,
  2010.

\bibitem[Mooij et~al.(2009)Mooij, Janzing, Peters, and
  Sch\"{o}lkopf]{Mooij2009}
J.~Mooij, D.~Janzing, J.~Peters, and B.~Sch\"{o}lkopf.
\newblock Regression by dependence minimization and its application to causal
  inference.
\newblock In \emph{ICML 26}, 2009.

\bibitem[Nolte et~al.(2008)Nolte, Ziehe, Nikulin, Schl\"ogl, Kr\"amer, Brismar,
  and M\"uller]{Nolte2008}
G.~Nolte, A.~Ziehe, V.~Nikulin, A.~Schl\"ogl, N.~Kr\"amer, T.~Brismar, and
  K.-R. M\"uller.
\newblock {Robustly Estimating the Flow Direction of Information in Complex
  Physical Systems}.
\newblock \emph{Phys. Rev. Letters}, 100, 2008.

\bibitem[Pearl(2009)]{Pearl2009}
J.~Pearl.
\newblock \emph{Causality: Models, reasoning, and inference}.
\newblock Cambridge University Press, 2nd edition, 2009.

\bibitem[Peters et~al.(2011{\natexlab{a}})Peters, Janzing, and
  Sch\"{o}lkopf]{Peters2011d}
J.~Peters, D.~Janzing, and B.~Sch\"{o}lkopf.
\newblock Causal inference on discrete data using additive noise models.
\newblock \emph{IEEE Trans. Pattern Analysis Machine Intelligence}, 33\penalty0
  (12):\penalty0 2436--2450, 2011{\natexlab{a}}.

\bibitem[Peters et~al.(2011{\natexlab{b}})Peters, Mooij, Janzing, and
  Sch{\"o}lkopf]{Peters2011}
J.~Peters, J.~M. Mooij, D.~Janzing, and B.~Sch{\"o}lkopf.
\newblock Identifiability of causal graphs using functional models.
\newblock In \emph{UAI 27}, 2011{\natexlab{b}}.

\bibitem[Shimizu et~al.(2006)Shimizu, Hoyer, Hyv\"{a}rinen, and
  Kerminen]{Shimizu2006}
S.~Shimizu, P.~O. Hoyer, A.~Hyv\"{a}rinen, and A.~J. Kerminen.
\newblock A linear {non-Gaussian} acyclic model for causal discovery.
\newblock \emph{Journal of Machine Learning Research}, 7:\penalty0 2003--2030,
  2006.

\bibitem[Spirtes et~al.(2000)Spirtes, Glymour, and Scheines]{Spirtes2000}
P.~Spirtes, C.~Glymour, and R.~Scheines.
\newblock \emph{Causation, Prediction, and Search}.
\newblock MIT Press, 2nd edition, 2000.

\bibitem[Zhang and Hyvarinen(2009)]{Zhang2009}
K.~Zhang and A.~Hyvarinen.
\newblock On the identifiability of the post-nonlinear causal model.
\newblock In \emph{UAI 25}, 2009.

\end{thebibliography}

\end{document}